\newtheorem{theorem}{Theorem}
\newtheorem{lemma}[theorem]{Lemma}
\newtheorem{remark}[theorem]{Remark}
\newcommand*{\latinabbr}[1]{#1}
\newcommand*{\ie}{\latinabbr{i.e.}} 
\DeclareMathOperator*{\argmin}{arg\,min}
\title{Generative models and Bayesian inversion using Laplace approximation}
\author{Manuel Marschall}
\author{Gerd W\"ubbeler}
\author{Franko Schm\"ahling}
\author{Clemens Elster}
\affil{Physikalisch-Technische Bundesanstalt,
       Abbestra\ss e 2-12,
       10587 Berlin, Germany}
\date{Date: \today}
\begin{document}

\maketitle

\begin{abstract}
The Bayesian approach to solving inverse problems relies on the choice of a prior. This critical ingredient allows the formulation of expert knowledge or physical constraints in a probabilistic fashion and plays an important role for the success of the inference. Recently, Bayesian inverse problems were solved using generative models as highly informative priors. Generative models are a popular tool in machine learning to generate data whose properties closely resemble those of a given database. Typically, the generated distribution of data is embedded in a low-dimensional manifold. For the inverse problem, a generative model is trained on a database that reflects the properties of the sought solution, such as typical structures of the tissue in the human brain in magnetic resonance (MR) imaging.
The inference is carried out in the low-dimensional manifold determined by the generative model which strongly reduces the dimensionality of the inverse problem. However, this proceeding produces a posterior that admits no Lebesgue density in the actual variables and the accuracy reached can strongly depend on the quality of the generative model.
For linear Gaussian models we explore an alternative Bayesian inference based on probabilistic generative models which is carried out in the original high-dimensional space.
A Laplace approximation is employed to analytically derive the required prior probability density function induced by the generative model. Properties of the resulting inference are investigated. Specifically, we show that derived Bayes estimates are consistent, in contrast to the approach employing the low-dimensional manifold of the generative model.
The MNIST data set is used to construct numerical experiments which confirm our theoretical findings.
It is shown that the proposed approach can be advantageous when the information contained in the data is high and a simple heuristic is considered for the detection of this case. Finally, pros and cons of both approaches are discussed.

\textbf{keywords:} \textit{Bayesian inference, Asymptotic properties of parametric estimators, Generative models, Machine learning, Laplace approximation}
\end{abstract}

\section{Introduction}
Inverse problems are ubiquitous and statistical methods for their treatment have been developed for a long time \cite{kaipio2006statistical,bissantz2008statistical,draper1998applied}.
Frequently, inverse problems are considered in a discretized form, resulting in (large-scale) regression tasks, or they are posed as discrete inverse problems from the start, for example in functional MR imaging \cite{smith2007spatial,lee2014spatial}. Inverse problems are often ill-posed, or ill-conditioned in the discrete case, and reliable estimation requires some form of regularization such as Tikhonov regularization \cite{engl1996regularization}. Bayesian inference \cite{robert2007bayesian,gelman1995bayesian} provides an alternative approach to ill-posed inverse problems, in which the employed prior renders the estimation task well-posed. For example, a Gaussian prior can lead to a maximum a posteriori (MAP) estimate equivalent to the result of a Tikhonov regularization. Gaussian Markov random field (GMRF) priors \cite{rue2005gaussian} are another popular class of priors used. For example, to model a priori spatial smoothness as it is often relevant in spatial modeling or image processing.

While these analytic priors have been successfully applied in many applications, they do not always adequately model the prior knowledge available in an individual problem. For example, in quantitative MR imaging of the brain employed conventional prior distributions do not truly reflect the structure of the brain~\cite{adler2018deep}. Generative models from machine learning using modern architectures such as \emph{generative adversarial networks} (GANs)~\cite{goodfellow2014generative} or \emph{variational auto-encoders} (VAEs)~\cite{kingma2019introduction}, on the other hand, have proven to generate data that closely resemble the properties of data in a training set.
Their efficiency, adaptability and easy accessibility in standard libraries result in a multitude of applications. From speech synthesis~\cite{saito2017statistical} over text generation~\cite{wang2018text} to molecule generation~\cite{hong2019molecular} and urbanization~\cite{albert2018modeling}, to mention but just a few.
For a comprehensive overview on the usage of deep learning methods for the solution of linear inverse problems we refer to~\cite{bai2020deep}, and for a review on the application of data-driven models  to~\cite{arridge2019solving}. For an overview of generative models in computer vision see for instance~\cite{cao2018recent,park2021review,yangjie2018review}, and for a review in medical imaging cf.~\cite{yi2019generative}.

In view of their capabilities, generative models have recently been considered as priors in a Bayesian treatment of inverse problems, cf., e.g., \cite{arridge2019solving}.
The advantage of such proceeding is that individual properties of the problem at hand, such as the structure of brain images, are adequately taken into account~\cite{adler2018deep}, which leads to an improved inference. Another advantage is that often the sought (discrete) function or field belongs to a low-dimensional manifold, which is exploited by current generative models such as GANs or VAEs. Then, the inference can be carried out in a low-dimensional space of latent variables, which strongly facilitates the calculation of the results in a Bayesian inference, cf. \cite{10.1002/essoar.10501256.1,griffiths2020constrained,mucke2021markov,bora2017compressed}.

However, restricting the inference to a low-dimensional latent space has the disadvantage that it admits no Lebesgue density for the posterior in the high-dimensional space of the actual variables~\cite{holden2021bayesian}. Furthermore, the estimation accuracy reached can strongly depend on the quality of the generative model~\cite{tripp2020sample}. To circumvent this drawback and somehow escape from the lower dimensional manifold,~\cite{holden2021bayesian} takes the mean of the push-forward of the posterior in the latent space as a Bayes estimate. Some novel approaches also incorporate the inversion problem directly into the learning process~\cite{hussein2020image}, e.g. to achieve super resolution or to reconstruct high fidelity magnetic resonance images~\cite{sood2018application,bhadra2020medical}. In this regard, we like to mention~\cite{adler2018deep} for their \emph{deep direct estimation} procedure and the development of a \emph{conditional Wasserstein GAN discriminator}, which allows sampling from the posterior.

In this paper, we consider a Bayesian treatment of linear Gaussian inverse problems that is carried out in the high-dimensional original variable space. The employed prior is determined by a probabilistic generative model. We propose an analytic solution to this task based on a Laplace approximation of the prior which is inspired by a treatment of~\cite{Liue2101344118} in the context of density estimation. The choice of the class of considered inverse problems is made in view of tasks such as inpainting, restoration of images~\cite{andrews1977digital} or medical imaging~\cite{bhadra2020medical,Kofler_2020}. Another motivation for our choice of problems is that it allows analytical results to be derived.

The properties of the proposed inference such as, for example, consistency, are explored and contrasted to those obtained for the inference which is carried out in the low-dimensional latent space. In addition, an image restoration task for blurred and noisy MNIST data~\cite{deng2012mnist} is taken to quantitatively compare the two approaches. The comparison is augmented by a simple heuristic for the choice of method. Pros and cons of both approaches are discussed in view of our results.

The paper is organized as follows. In Section~\ref{sec:generator and inverse problem} the considered class of inverse problems is specified and generative models are introduced. The Bayesian treatment of the inverse problems utilizing a generative model as a prior is then considered in Section~\ref{sec:BI}. After recalling the inference utilizing the low-dimensional latent space and characterizing its properties, the proposed approach using a probabilistic generator is developed and explored. Subsequently, the two approaches are discussed and assessed in terms of their properties. Our treatment assumes knowledge about the variance of the observations, and finally, we briefly note on the possible generalization to the case of unknown variance. In Section~\ref{sec:numerics} numerical examples are presented for a quantitative assessment. An outlook on potential future research and conclusions from our findings are given in Section~\ref{sec:outlook}.

\section{Generative models and considered class of inverse problems}
\label{sec:generator and inverse problem}
This section specifies the considered class of inverse problems and introduces generative models. The generative models will later be used for the construction of a prior in a Bayesian treatment of the inverse problems. Two classes of generative models are distinguished: one that uses a \emph{deterministic} map applied to the latent variables and a \emph{probabilistic} approach. The former will be taken to carry out a Bayesian inference in a low-dimensional latent space, the latter serves as the starting point for the proposed inference carried out directly in the high-dimensional space of the actual variables.

\subsection{Linear inverse problems}
We consider linear inverse problems of the form
\begin{equation}\label{eq2.1}
    y|x \sim \mathcal N(Ax, \sigma^2I),
\end{equation}
where $A$ denotes some given operator mapping from the (original) space $\mathcal X$ to the space of observables $\mathcal Y$. We take
$\mathcal X = \mathbb{R}^d$ and $\mathcal Y = \mathbb{R}^n$ so that $A$ is an $n$ by $d$ matrix. We assume throughout that $A$ has full rank and $n \ge d$. The goal is to infer $x$ given the data $y$. The variance $\sigma^2$ is mostly treated as a fixed given parameter and is largely suppressed in our notation, except for those cases in which it is explicitly included in the inference.

Problems of the form \eqref{eq2.1} typically emerge from the discretization of a continuous inverse problem in which $x$ could be a spatially distributed property of the human body or an image that has been blurred. Usually, the dimension of the (discretized) $x$ will be high and the linearity assumption simplifies matters. The assumption made about the structure of the covariance matrix in the sampling distribution \eqref{eq2.1} essentially means that the full covariance matrix needs to be known up to a factor, a situation which, after a suitable linear transformation, yields a problem of the form in \eqref{eq2.1}.
We note that simple models of the form \eqref{eq2.1} are relevant in many applications, for example functional MR imaging~\cite{boynton1996linear}, inpainting~\cite{richard2001fast} or de-blurring of images~\cite{carasso1999linear}.

\subsection{Generative models}
Generative models such as VAEs or GANs can produce random samples by transforming a simple distribution of latent variables, e.g. a multivariate standard Gaussian distribution, through a neural network. Those networks are trained on a (large) database in such a way that the resulting distribution approximates the distribution underlying the employed database. The database could for instance consist of a training set of images with a specific characteristic for the task at hand, e.g. handwritten digits from the MNIST data set or a sequence of MR images from various patients. Often, such data sets can be modeled as belonging to a low-dimensional manifold $\mathcal{M}\subset\mathcal{X}$ which is exploited by the generative models through a correspondingly chosen low dimension of the latent variables.

\begin{figure}
    \centering
\begin{tikzpicture}[thick,scale=0.8, every node/.style={transform shape}]
    \node[anchor=north west] (imagenormal){
        \includegraphics[width=.7\textwidth]{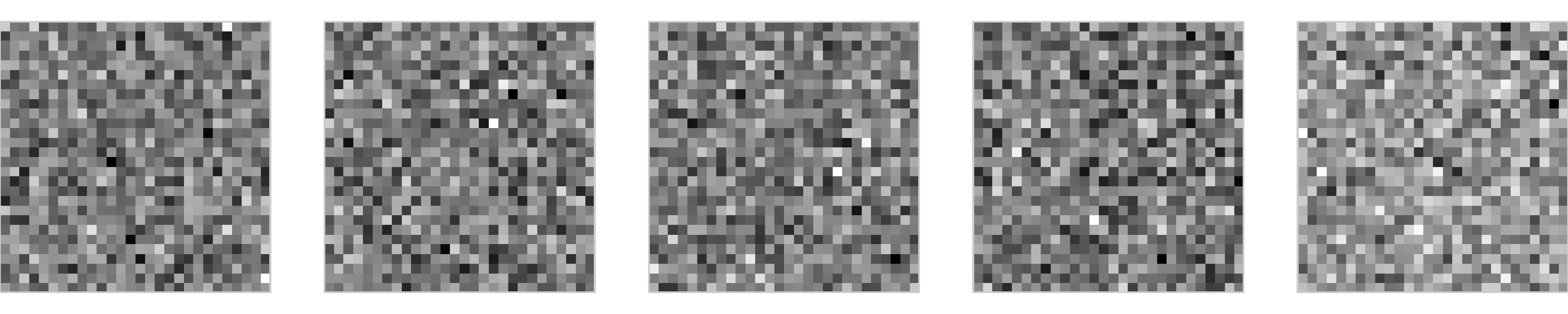}
    };
    \node[anchor=south west, left= 0.2cm of imagenormal, text width=3cm, align=center] (textnormal) {
       \small Homoscedastic Gaussian
    };
    \node[anchor=north west, below= 0cm of imagenormal] (imagegmrf) {
        \includegraphics[width=.7\textwidth]{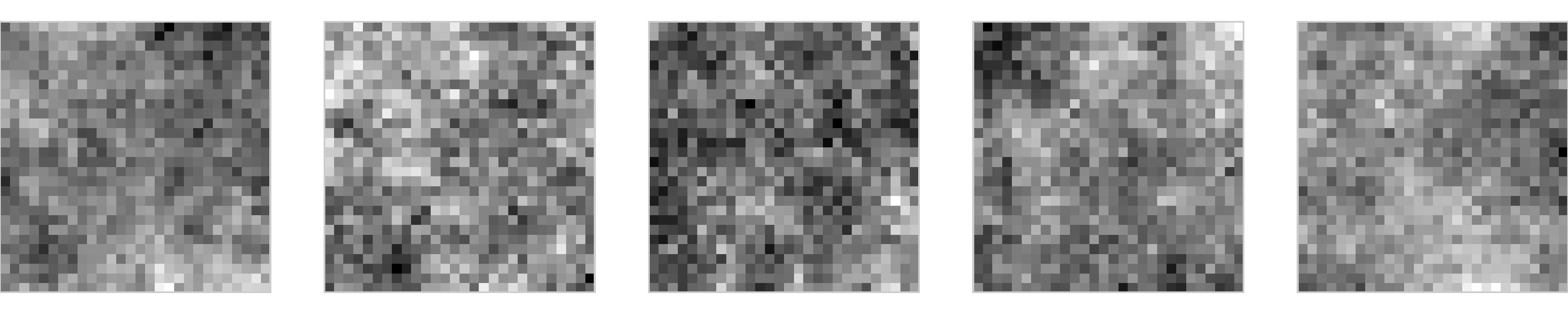}
    };
    \node[anchor=south west, left=0.2cm of imagegmrf, text width=3cm, align=center] (textgmrf) {
        \small Gaussian Markov random field
    };
    \node[anchor=north west, below=0.8cm of textgmrf, text width=3cm, align=center] (generator) {
        \includegraphics[width=.7\textwidth]{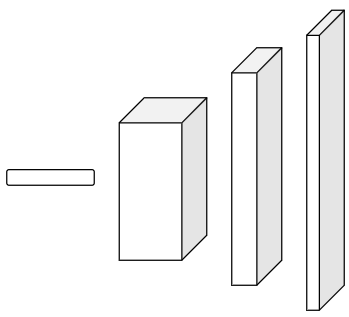}

        Generative model
    };
    \node[below left=-2cm and -1.1cm of generator](zspace) {\small $\mathcal{Z}$};
    \node[below right=-1.6cm and -0.6cm of generator](xspace) {\small $\mathcal{X}$};
    \node[anchor=north west, below=0cm of imagegmrf] (x) {
        \includegraphics[width=.7\textwidth]{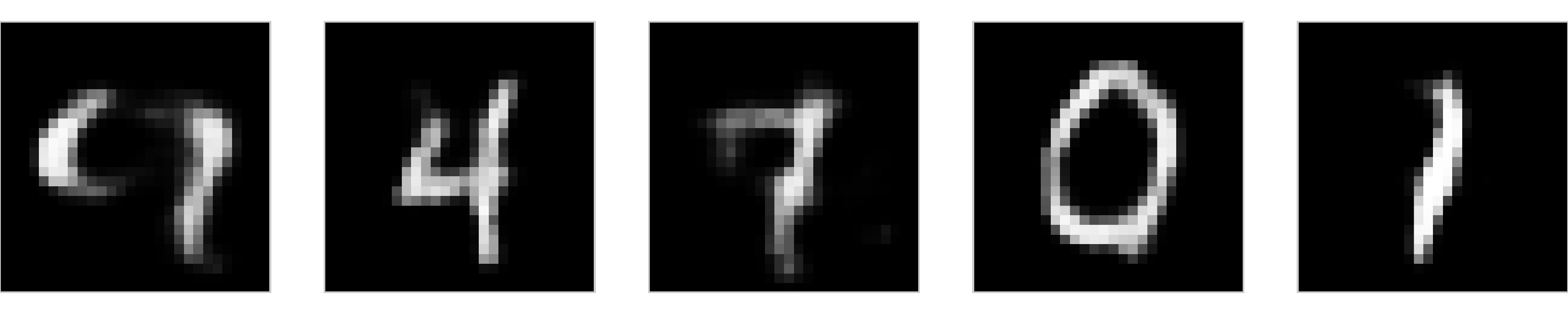}
    };
\end{tikzpicture}
    \caption{Different variants of prior knowledge visualized by samples: (top row) a standard Gaussian model $\mathcal N(0, I)$, (mid row) realizations of a Gaussian Markov random field (GMRF) with eight nearest neighbor interaction and (bottom row) realizations of a generative model trained on the set of handwritten digits (MNIST). All images are scaled to $[0, 1]$.}
    \label{fig:sketch_model}
\end{figure}

In Figure~\ref{fig:sketch_model} we depict the considered structure of the generative model. A latent vector $z\in\mathcal{Z} := \mathbb{R}^p$ of lower dimension $p\ll d$ is mapped by the generator to the original variable space $\mathcal{X}$. Throughout this work, we assume that such a \emph{generator} or generative model is available and that it has already been \emph{trained} on a data set, together with a multivariate standard Gaussian distribution $\pi(z) = N(z\vert 0, I)$ as prior for the latent variables.
In literature, there are usually two types of generator outputs considered. The \emph{deterministic} map, subsequently denoted by $g\colon \mathcal{X}\to\mathcal{M}$, and the \emph{probabilistic} formulation.
For the probabilistic formulation, we assume a Gaussian model according to
\begin{equation}
  \label{eq:encoder-decoder}
    \quad \pi(x\vert z) = \mathcal{N}(x \vert g(z), \Gamma(z)),
\end{equation}
where $g(z)$ and $\Gamma(z)$ denote the outputs of a trained (deep) neural network for input $z$. The covariance matrix is modeled as one of the following variants.
\begin{equation}
    \Gamma(z) = \left\{
    \begin{array}{ll}
         \lambda(z)^{-1}I, &  \text{constant with precision }  \lambda(z)>0\\
         \mathrm{diag}(\gamma_1(z), \ldots, \gamma_d(z)), & \text{diagonal with variances } \gamma_i(z)>0, i=1,\ldots, d\\
         \Gamma(z), & \text{full covariance matrix in } \mathbb{R}^{d, d}.
    \end{array}\right.
\end{equation}

The motivation for the use of generative models for the data-driven construction of a prior is that these models are extremely versatile and capable to produce a distribution whose realizations closely resemble those of the database used to train them,
cf. the randomly produced digits produced by a trained generative model in Figure 1 which resemble the properties of the MNIST digit database. When a (large) database is available whose members represent typical features of the solution of a considered inverse problem, a prior constructed by a generative model trained on that database can be highly informative and beneficial for a Bayesian solution to that problem.
A conventional prior such as a standard Gaussian prior (Figure~\ref{fig:sketch_model} top row) or a GMRF prior (Figure~\ref{fig:sketch_model} middle row), used to turn an inference into a well-posed problem and exploiting the prior knowledge of smoothness, on the other hand, will generally be much less informative, for example when considering the task to infer a digit from a blurred image of it. In fact, realizations of such a prior will not even approximately resemble a digit. Supplying a large database for a particular problem often is challenging, and techniques such as data augmentation~\cite{shorten2019survey} or virtual experiments are used in this context. However, these issues are beyond the scope of this paper.

\section{Bayesian inference using generative models}
\label{sec:BI}
A Bayesian inference is considered for inverse problems of the form \eqref{eq2.1} when using a prior that is constructed from a generative model. We start by
recalling an inference in latent space, followed by a push-forward through the deterministic mapping of the generator, as proposed in~\cite{holden2021bayesian,mucke2021markov}. Then an inference procedure is developed that works directly in the space
$\mathcal{X}$ of the actual variables by using a prior constructed from a probabilistic formulation of the generator. Finally, pros and cons of both approaches are discussed in terms of the inferential properties derived for them.

\subsection{Inference in latent space}
\label{sec:deterministic}
This approach models the data from \eqref{eq2.1} by
\begin{equation}\label{eq3.1}
    y|z \sim \mathcal N(Ag(z), \sigma^2I),
\end{equation}
where $g$ denotes the deterministic mapping of the employed generative model and the variance $\sigma^2>0$ is assumed to be known. In using the prior
\begin{equation}
\pi(z)= \mathcal{N}(z \vert 0, I)
\end{equation}
of the generative model, the posterior in latent space is proper and given by
\begin{equation}
  \label{eq:latent posterior}
    \pi(z\vert y) \propto \exp\left(-\frac{1}{2\sigma^2}\|Ag(z) - y\|_2^2\right)\pi(z).
\end{equation}
Then, a subsequent change of variables using the deterministic map $g$ generates a distribution in $\mathcal{X}$. By the fairly general form and usually nonlinear, non-invertible structure of $g$, the posterior in latent space has no closed form. A similar approach is pursuit in~\cite{10.1002/essoar.10501256.1} using a deterministic Auto-Encoder and in~\cite{holden2021bayesian} using a VAE with deterministic decoder. The latter reference correctly argues that the $g$ push-forward of the posterior in latent space does not admit a Lebesgue density in the variable space $\mathcal{X}$. We summarize these properties in the following lemma.
\begin{lemma}[Degenerated push-forward]
\label{lem:latent posterior}
Assume that the mapping $g$ satisfies some (weak) regularity conditions, for example continuity. Then the posterior in latent space~\eqref{eq:latent posterior} is proper and has finite $p$-th moment for $p<\infty$. The $g$ push-forward of $\pi(z|y)$ is a distribution in $\mathcal{X}$ which admits no Lebesgue density.
\end{lemma}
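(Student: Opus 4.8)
The plan is to treat the three assertions separately, handling properness and finiteness of moments by elementary domination and reserving the real work for the absence of a Lebesgue density.

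First I would record that the likelihood factor $L(z) := \exp\bigl(-\tfrac{1}{2\sigma^2}\|Ag(z)-y\|_2^2\bigr)$ is measurable (continuity of $g$ is more than enough) and satisfies $0 < L(z) \le 1$ for every $z$, since its exponent is non-positive. Consequently the normalising constant $Z := \int_{\mathbb{R}^p} L(z)\,\pi(z)\,dz$ obeys $0 < Z \le \int_{\mathbb{R}^p}\pi(z)\,dz = 1$: it is finite by the domination $L \le 1$ and strictly positive because the integrand is strictly positive. Hence $\pi(z\mid y) = Z^{-1} L(z)\pi(z)$ is a genuine probability density and the posterior is proper. The moment bound follows from the same domination: writing $m$ for the (finite) moment order—to distinguish it from the latent dimension, also denoted $p$ in the paper—one has $\int \|z\|_2^m\, L(z)\pi(z)\,dz \le \int \|z\|_2^m\,\pi(z)\,dz < \infty$, because the standard Gaussian $\pi$ possesses all polynomial moments; dividing by $Z$ yields a finite $m$-th moment for every $m<\infty$.

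The substantive claim is that the push-forward $\nu := g_\#\,\pi(\cdot\mid y)$ carries no Lebesgue density on $\mathcal{X}=\mathbb{R}^d$. The strategy is to exhibit a Borel set $S\subseteq\mathbb{R}^d$ with $\nu(S)=1$ and $\mathcal{L}^d(S)=0$; since any measure absolutely continuous with respect to $\mathcal{L}^d$ must vanish on $\mathcal{L}^d$-null sets, such an $S$ forces $\nu\not\ll\mathcal{L}^d$, and the Radon--Nikodym theorem then rules out a density. The natural candidate is the image $S := g(\mathbb{R}^p)$, on which $\nu$ is trivially concentrated (as $g^{-1}(S)=\mathbb{R}^p$, so $\nu(S)=\pi(\mathbb{R}^p\mid y)=1$), so everything reduces to showing $\mathcal{L}^d\bigl(g(\mathbb{R}^p)\bigr)=0$.

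This last reduction is also where the main obstacle sits, and it reveals that bare continuity is in fact insufficient: a space-filling (Peano/Hilbert) curve is a continuous surjection of a lower-dimensional domain onto a set of positive Lebesgue measure, and its push-forward of the one-dimensional Lebesgue measure can even be absolutely continuous. I would therefore strengthen the hypothesis to the condition actually met by the generators under consideration—namely that $g$ is (locally) Lipschitz, as holds for feed-forward networks built from Lipschitz activations—or, equivalently for our purposes, that the image lies in a $p$-dimensional $C^1$ submanifold $\mathcal{M}$. Under local Lipschitzness the argument closes cleanly: writing $\mathbb{R}^p=\bigcup_{k\in\mathbb{N}}[-k,k]^p$, the image is a countable union of Lipschitz images of compact cubes, and a Lipschitz map does not increase Hausdorff dimension, so each piece has $\dim_H\le p<d$ and hence $\mathcal{L}^d$-measure zero; a countable union of null sets is null, giving $\mathcal{L}^d(S)=0$. (If one prefers the manifold viewpoint, $\mathcal{M}$ is covered by countably many charts, each a Lipschitz image of an open subset of $\mathbb{R}^p$, and the same conclusion follows.) Combining the three steps establishes the lemma.
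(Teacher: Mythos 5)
Your proof is correct and, for propriety and the existence of moments, coincides with the paper's own argument: both rest on the likelihood factor being bounded above by $1$ and on the standard Gaussian prior possessing all polynomial moments. For the absence of a Lebesgue density the two arguments diverge in a way worth noting. The paper simply invokes its standing modelling assumption that $g$ maps into a manifold $\mathcal{M}\subset\mathcal{X}$ of dimension lower than $d$, so that the push-forward is concentrated on a Lebesgue-null set; no property of $g$ beyond this is used. You instead derive the null-set property from $g$ itself, and in doing so you correctly observe that the hypothesis as literally stated --- ``some (weak) regularity conditions, for example continuity'' --- is not sufficient: a space-filling curve is a continuous map out of a lower-dimensional domain whose image has positive $d$-dimensional Lebesgue measure, and it can even push a measure forward to an absolutely continuous one. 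Your replacement hypothesis (local Lipschitz continuity, which does hold for the feed-forward generators the paper has in mind), combined with the fact that Lipschitz maps do not increase Hausdorff dimension and a countable exhaustion of $\mathbb{R}^p$ by compact cubes, closes this gap and makes the lemma self-contained at the mild cost of a slightly stronger assumption on $g$. Both routes are valid under their respective assumptions; yours is the more careful one and repairs an imprecision in the lemma's formulation rather than merely reproducing the paper's appeal to the low-dimensionality of $\mathcal{M}$.
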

\begin{proof}
Propriety and the existence of moments for the posterior in latent space follows directly by $1$ being an upper bound for the continuous likelihood exponent and the existence of every $p$-th moment $p<\infty$ for the Gaussian prior $\pi(z)$.
The existence of no Lebesgue density on $\mathcal{X}$ follows from the fact that $g$ maps all probability mass to $\mathcal{M}$ which is assumed to have lower dimensionality than $\mathcal{X}$.
\end{proof}
Even though the $g$ push-forward of $\pi(z|y)$ does not admit a Lebesgue density, it is a well-defined distribution.
To efficiently compute statistics of the $g$ push-forward of the latent posterior, the authors in~\cite{holden2021bayesian} employ a \emph{parallel tempered, preconditioned Crank-Nicolson MCMC} to generate samples $\{z_i\}_i^N$ from the posterior in latent space $\mathcal Z$ to subsequently approximate the posterior expectation
\begin{equation}
  \label{eq:latent posterior mean}
    E_{\pi(z\vert y)}[g(z)\vert y] \approx \frac{1}{N}\sum_{i=1}^N g(z_i).
\end{equation}
This expectation does generally not belong to the image space of the generator map, but it can be expected to be close to it. Alternatively, one can numerically compute the MAP $z_{\mathrm{MAP}}$ of the posterior~\eqref{eq:latent posterior} in the latent space and then use $g(z_{\mathrm{MAP}})$ as an estimate. This choice directly illuminates the limitations of the approach. By construction, $g(z_{\mathrm{MAP}})$ is limited to the image space of the generator map.
This observation is a result of an inherent bias which is introduced by the statistical model~\eqref{eq3.1} compared to the model~\eqref{eq2.1}. Consequently, arguing from a frequentist point-of-view, the latent space approach suffers from consistency issues.
In particular, the posterior mean estimate in~\eqref{eq:latent posterior mean} viewed in dependence on the observation $y$ takes the role of an Bayes estimator for the true value $x\in\mathcal X$ under $L^2$ loss. This estimator is usually referred to the minimum mean square error (MMSE) estimator. This interpretation allows us to formulate the following lemma as the main result regarding consistency in the latent space approach.
\begin{lemma}[Inconsistency of Bayes estimator]
\label{lem:inconsistency}
Let $\widehat{g(z)} = \widehat{g(z)}(y) =  E_{\pi(z\vert y)}[g(z)\vert y]$ denote the MMSE estimator and consider  $\{\widehat{g_\sigma(z)} \}_\sigma$ explicitly dependent on the data variance of the sampling distribution~\eqref{eq3.1}.
Furthermore, assume that $g$ is continuous and such that the model~\eqref{eq3.1} is identifiable.
Let $x\in\mathcal{X}$ and assume $x$ is not contained in the image space of the generator map, \ie, there exists a $\delta>0$ such that $\min_{z\in\mathcal{Z}} \|x - g(z)\|\geq\delta$. Then, the estimator $\widehat{g_\sigma(z)}$ is not consistent, \ie, $\widehat{g_\sigma(z)}$ does not converge to $x$ in probability as $\sigma\to 0$.
\end{lemma}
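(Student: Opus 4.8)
The plan is to treat $\sigma \to 0$ as a vanishing-noise asymptotic and to show that the posterior mean is trapped on the image manifold $\mathcal{M} = g(\mathcal Z)$, which is uniformly separated from $x$. Writing the data as $y = Ax + \sigma\varepsilon$ with $\varepsilon \sim \mathcal N(0,I)$, note first that $y \to Ax$ in probability as $\sigma \to 0$, since $\|y - Ax\| = \sigma\|\varepsilon\|$. The idea is that as the noise shrinks, the likelihood $\exp(-\tfrac{1}{2\sigma^2}\|Ag(z) - y\|^2)$ becomes sharply peaked, so the posterior $\pi(z\vert y)$ concentrates on the set of minimisers of $z \mapsto \|Ag(z) - y\|^2$; the posterior mean $\widehat{g_\sigma(z)} = E_{\pi(z\vert y)}[g(z)]$ should then converge to the corresponding closest point of $\mathcal M$, which cannot coincide with $x$. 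I would conclude that $\widehat{g_\sigma(z)}$ stays bounded away from $x$ with high probability, contradicting consistency.

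The quantitative heart of the argument uses the full-rank assumption. Since $A$ has full rank with $n \ge d$, its smallest singular value $c > 0$ satisfies $\|Av\| \ge c\|v\|$ for all $v \in \mathcal X$, so for every $z$ we have $\|Ag(z) - Ax\| = \|A(g(z) - x)\| \ge c\|g(z) - x\| \ge c\delta$, using the separation hypothesis $\min_{z\in\mathcal Z}\|x - g(z)\| \ge \delta$. Hence the limiting residual $\min_z \|Ag(z) - Ax\|$ is bounded below by $c\delta > 0$: the model \eqref{eq3.1} can never fit the limiting data $Ax$ exactly. First I would make the concentration precise, showing that for any neighbourhood $U$ of the minimiser set $Z^\ast(y) = \argmin_z\|Ag(z) - y\|^2$ the posterior mass $\pi(\mathcal Z \setminus U \vert y)$ tends to $0$ as $\sigma \to 0$; continuity of $g$ together with the Gaussian prior gives the requisite control of the exponent away from $Z^\ast$. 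The role of identifiability (injectivity of $z \mapsto Ag(z)$, equivalently of $g$ since $A$ is injective) together with the generic noise realisation is to single out a well-defined closest latent value $z^\ast = z^\ast(y)$, so that $Z^\ast(y) \to \{z^\ast\}$ and $g(z^\ast)$ tends to the closest point of $\mathcal M$ to $x$ in the $A$-weighted geometry.

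Next I would upgrade concentration in $z$ to convergence of the estimator itself. Because $g$ is continuous and the posterior has finite moments of every order by Lemma~\ref{lem:latent posterior}, a dominated-convergence / uniform-integrability argument shows $\widehat{g_\sigma(z)} = E_{\pi(z\vert y)}[g(z)] \to g(z^\ast)$; the finite-moment bound is exactly what is needed to rule out posterior mass escaping to infinity and corrupting the mean. Since $g(z^\ast) \in \mathcal M$, the separation hypothesis yields $\|g(z^\ast) - x\| \ge \delta$. Note that a naive attempt to bound $\|\widehat{g_\sigma(z)} - x\|$ directly by $E_{\pi(z\vert y)}[\|g(z) - x\|] \ge \delta$ fails, because Jensen's inequality runs the wrong way for the norm of an average; this is precisely why the concentration step, rather than a pointwise bound, is needed.

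Putting the pieces together, $\|\widehat{g_\sigma(z)} - x\| \to \|g(z^\ast) - x\| \ge \delta$ in probability, so with $\eta = \delta/2$ the probability $P(\|\widehat{g_\sigma(z)} - x\| \ge \eta)$ does not vanish as $\sigma \to 0$, which is exactly the failure of consistency. I expect the main obstacle to be the rigorous posterior-concentration step together with the uniform-integrability control of the mean: one must show that essentially all posterior mass collects near the closest point(s) of $\mathcal M$, and that the random tie-breaking induced by the noise $\varepsilon$ selects a single such point for almost every realisation, so that the limit of the mean is a genuine point of $\mathcal M$ rather than an interior average of several separated minimisers (which could in principle drift toward $x$).
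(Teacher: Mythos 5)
Your proposal is correct and follows essentially the same route as the paper's own proof: the misspecified posterior concentrates at the pseudo-true latent value $z_0=\argmin_{z}\|Ax-Ag(z)\|_2$, the full-rank assumption on $A$ transfers convergence of $A\widehat{g_\sigma(z)}$ to convergence of $\widehat{g_\sigma(z)}$ toward $g(z_0)\in\mathcal{M}$, and the separation hypothesis $\|x-g(z_0)\|\geq\delta$ rules out consistency. You simply make explicit several steps the paper leaves implicit (posterior concentration, uniform integrability of the mean, and possible non-uniqueness of the minimiser), which only strengthens the argument.
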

\begin{proof}
  By forming an identifiable model and having a posterior in latent space such as~\eqref{eq:latent posterior} implies
  \begin{equation}
    A\widehat{g(z)_\sigma}\underset{\sigma\to 0}{\overset{p}\longrightarrow}Ag(z_0) \text{, \ie, for all }\epsilon>0, ~ \mathbb{P}(\|A\widehat{g_\sigma(z)} - Ag(z_0) \|_2 > \epsilon) \longrightarrow 0,
  \end{equation}
  where $z_0 = \argmin_{z\in\mathcal{Z}} \|Ax - Ag(z)\|_2$.
  Since $A$ is full rank, it follows $\widehat{g_\sigma(z)}\underset{\sigma\to 0}{\overset{p}\longrightarrow} g(z_0)$ and with the assumption $\|x-g(z_0)\|\geq \delta > 0$ the claim follows. In particular, for $\epsilon < \delta$ it holds $\mathbb{P}(\|\widehat{g_\sigma(z)} - x\|_2 > \epsilon) \nrightarrow 0.$
\end{proof}
\begin{remark}
\label{rem:limit}
Taking the limit with respect to $\sigma$ for the considered linear inverse problem is equivalent to taking the limit with respect to infinitely many repeated observations of $y$. This can be seen by the fact that the mean of $y$ in the likelihood is a sufficient statistic with variance $\sigma^2/k$, where $k$ is the number of repetitions.
\end{remark}
\begin{remark}
The result above generalizes to more general estimators defined by a possible different loss, which induces a different topology in $\mathcal{X}$.
\end{remark}

Having the asymptotic behavior of the estimator established, we are also interested in the asymptotic covariance of the $g$ push-forward of the latent posterior.
\begin{lemma}[Asymptotic covariance]
\label{lem:covariance push forward}
Assume that $g$ is continuous and renders the model~\eqref{eq3.1} identifiable. Then, there exists $z^\ast\in\mathcal{Z}$ such that the latent posterior~\eqref{eq:latent posterior} converges in total variation norm to the Dirac measure centered at $z^\ast$. Moreover, assume that $g$ is totally differentiable in $z^\ast$. Then, the asymptotic covariance of the $g$ push-forward of the latent space posterior is given by the inverse of the Fisher information matrix at $z^\ast$
\begin{equation}
    \check{C} = J_{z^\ast}\left(\sigma^{-2} J_{z^\ast}^T A^T A J_{z^\ast}\right)^{-1}J_{z^\ast}^T,
\end{equation}
where $J_{z^\ast}$ denotes the Jacobi matrix of $g$ evaluated at $z=z^\ast$.
\end{lemma}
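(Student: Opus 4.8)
The plan is to read the limit $\sigma \to 0$ as the large-sample limit $k \to \infty$ of repeated observations (Remark~\ref{rem:limit}), so that the statement becomes a Bernstein--von Mises result for the latent parameter, followed by a delta-method push-forward through $g$. I would split the argument into three stages: identify the concentration point $z^\ast$ and prove total-variation convergence of $\pi(z\vert y)$ to $\delta_{z^\ast}$; compute the Fisher information of the latent model~\eqref{eq3.1} at $z^\ast$; and transport the resulting asymptotic covariance through $g$.

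For the first stage, the natural candidate is $z^\ast = \argmin_{z \in \mathcal Z}\|Ax - Ag(z)\|$, which coincides with the concentration point appearing in Lemma~\ref{lem:inconsistency}; identifiability of~\eqref{eq3.1} makes this minimizer unique, and as $\sigma \to 0$ the data concentrate at their mean $Ax$. Continuity of $g$, boundedness of the likelihood exponent and integrability of the Gaussian prior (Lemma~\ref{lem:latent posterior}) ensure the likelihood dominates the prior in the limit, so a standard posterior-concentration argument gives that $\pi(z\vert y)$ contracts to $z^\ast$: the exponent $-\tfrac{1}{2\sigma^2}\|Ag(z)-y\|^2$ acts as an increasingly sharp penalty away from $z^\ast$, forcing the total-variation distance to $\delta_{z^\ast}$ to zero.

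For the second stage I would differentiate the log-likelihood $\ell(z) = -\tfrac{1}{2\sigma^2}\|Ag(z)-y\|^2$ to obtain the score $\nabla_z \ell = \sigma^{-2} J_z^T A^T(y - Ag(z))$, with $J_z$ the Jacobi matrix of $g$. The first-order optimality of $z^\ast$ makes the score have vanishing mean there, and using $\mathrm{Cov}(y\vert z) = \sigma^2 I$ the Fisher information evaluates to
\[
  I(z^\ast) = E\!\left[\nabla_z \ell\, \nabla_z \ell^T\right] = \sigma^{-2} J_{z^\ast}^T A^T A J_{z^\ast},
\]
whose invertibility is exactly the full-rank condition on $A J_{z^\ast}$ implied by full rank of $A$ together with identifiability. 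Bernstein--von Mises then yields that the latent posterior is asymptotically Gaussian with covariance $I(z^\ast)^{-1} = (\sigma^{-2} J_{z^\ast}^T A^T A J_{z^\ast})^{-1}$, whose $\sigma^2$-scaling reflects the vanishing posterior spread as $\sigma\to 0$.

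In the final stage I would push this covariance forward through $g$ by the delta method: total differentiability at $z^\ast$ gives $g(z) = g(z^\ast) + J_{z^\ast}(z - z^\ast) + o(\|z - z^\ast\|)$, so that the $g$ push-forward has asymptotic covariance $J_{z^\ast} I(z^\ast)^{-1} J_{z^\ast}^T = \check C$, the claimed formula. The main obstacle I anticipate is the rigorous justification of Bernstein--von Mises in this nonstandard parametrization: because $g$ is nonlinear and generally non-injective, one must verify local asymptotic normality and the non-singularity of $I(z^\ast)$, and---since the Hessian of $\ell$ carries a residual-weighted second-derivative term---identify the relevant curvature with the Gauss--Newton form $\sigma^{-2}J_{z^\ast}^T A^T A J_{z^\ast}$. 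Equally delicate is controlling the linearization remainder in the delta step, so that the $o(\|z-z^\ast\|)$ contribution is negligible in the covariance; the total-variation contraction established in the first stage is what makes this control possible.
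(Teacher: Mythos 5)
Your proposal follows essentially the same route as the paper's proof: invoke Bernstein--von Mises for the latent posterior, identify the asymptotic covariance with the inverse Fisher information $\left(\sigma^{-2}J_{z^\ast}^TA^TAJ_{z^\ast}\right)^{-1}$ (the paper obtains this from the second derivative of the log-likelihood together with the vanishing gradient at $z^\ast$, you from the score outer product --- equivalent here), and then push forward through the linearization of $g$ at $z^\ast$. Your version is in fact somewhat more careful than the paper's, since you explicitly flag the Gauss--Newton curvature identification and the control of the delta-method remainder as the points requiring justification, which the paper passes over silently.
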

\begin{proof}
The model~\eqref{eq3.1} and the latent posterior~\eqref{eq:latent posterior} fulfill the assumptions of the Bernstein-von-Mises theorem. Hence, the convergence of the posterior to a Gaussian can be assessed and $\check C$ follows by simple calculus. Taking the second derivative of the log-likelihood and using that the gradient of the log-likelihood is zero at $z^\ast$ directly gives the asymptotic covariance of the latent posterior $ C_1 = \left(\sigma^{-2} J_{z^\ast}^T A^T A J_{z^\ast}\right)^{-1}$. A subsequent linearization of $g$ yields the claim that $\check C = J_{z^\ast}C_1 J_{z^\ast}^T$.
\end{proof}

\subsection{Inference in original space}
\label{sec:probabilistic}
As an alternative, we propose to perform the inference in the original space $\mathcal{X}$. We introduce an efficient way of solving the inverse problem by introducing a Laplace approximation for the prior.

This approach uses the actual data model \eqref{eq2.1} and employs the hierarchical prior
\begin{eqnarray}
  \label{eq:encoder-decoder2}
	x|z  &\sim& \mathcal{N}(g(z), \Gamma(z)), \nonumber \\
	z &\sim&  \mathcal{N}(0, I) ,
\end{eqnarray}
where $g$ and $\Gamma$ are given by (deep) neural networks, cf. Section 2.2.
Throughout this work, we assume that $\Gamma(z)$ is positive definite and its smallest eigenvalue is bounded away from zero, for every $z\in\mathcal Z$.
Then, the resulting posterior is given by
\begin{equation}
  \label{eq:posterior}
    \pi(x\vert y) \propto \exp\left(-\frac{1}{2\sigma^2}\|Ax-y\|_2^2\right) \pi(x),
\end{equation}
and we collect some of its properties in the following lemma.
\begin{lemma}[Variable space posterior is proper]
\label{lem:analytic posterior in original space}
The posterior~\eqref{eq:posterior} is proper, has finite $p-$th moment for $p<\infty$ and fulfills the assumptions of the Bernstein-von-Mises theorem.
\end{lemma}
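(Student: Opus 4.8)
The plan is to treat the three claims in turn, exploiting two structural features of~\eqref{eq:posterior}: the likelihood is bounded above by $1$, and the induced prior $\pi(x)=\int \mathcal{N}(x\mid g(z),\Gamma(z))\,\pi(z)\,dz$ is itself a probability density, being the $x$-marginal of the joint hierarchical model~\eqref{eq:encoder-decoder2}. For propriety I would use the non-positivity of the likelihood exponent, giving the pointwise bound $\exp(-\tfrac{1}{2\sigma^2}\|Ax-y\|_2^2)\le 1$, so that
\[
0<\int \exp\!\left(-\tfrac{1}{2\sigma^2}\|Ax-y\|_2^2\right)\pi(x)\,dx\le\int\pi(x)\,dx=1 .
\]
Integrability follows from the upper bound; strict positivity of the continuous likelihood together with $\pi(x)>0$ — the latter because the Gaussian integrand is everywhere positive and $\pi(z)>0$ — shows the normalising constant $Z$ is strictly positive, so the posterior is proper.

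For the moments, the same likelihood bound yields $E_{\pi(x\mid y)}[\|x\|^p]\le Z^{-1}E_{\pi(x)}[\|x\|^p]$, so it suffices to control the prior moments. By Fubini,
\[
E_{\pi(x)}[\|x\|^p]=\int m_p(z)\,\pi(z)\,dz,\qquad m_p(z):=E_{x\mid z}[\|x\|^p],
\]
where $m_p(z)$ is the $p$-th absolute moment of a $\mathcal{N}(g(z),\Gamma(z))$ vector and hence admits the standard bound $m_p(z)\le C_p\big(\|g(z)\|^p+(\operatorname{tr}\Gamma(z))^{p/2}\big)$. Here I would invoke that $g$ and $\Gamma$, being (Lipschitz) neural networks, grow at most polynomially—indeed linearly—in $z$, so $m_p(z)$ is dominated by a polynomial in $\|z\|$; since the Gaussian $\pi(z)$ has finite moments of every order, the integral converges and all prior, hence all posterior, moments are finite.

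For the Bernstein--von-Mises hypotheses I would check the conditions on likelihood, identifiability and prior separately. The likelihood is a smooth (infinitely differentiable) Gaussian family in $x$ with Fisher information $I(x)=\sigma^{-2}A^{\mathsf T}A$, which is positive definite because $A$ has full rank; full rank of $A$ also makes $x\mapsto Ax$ injective, so the model is identifiable, and the asymptotic regime $\sigma\to0$ is precisely the repeated-observation limit of Remark~\ref{rem:limit}. For the prior I need it to be positive and continuous at the true value: positivity was already observed, and continuity follows from dominated convergence, using that the eigenvalue lower bound on $\Gamma(z)$ forces $\det\Gamma(z)\ge c^{d}$ and hence a uniform bound $\mathcal{N}(x\mid g(z),\Gamma(z))\le (2\pi c)^{-d/2}$, which furnishes the integrable dominating function $(2\pi c)^{-d/2}\pi(z)$. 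A smooth, identifiable likelihood, a nondegenerate Fisher information, and a prior that is positive and continuous at $x$ are exactly the assumptions required by the theorem.

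I expect the genuine technical obstacle to be the moment estimate. The remaining parts are immediate consequences of the likelihood being bounded by $1$ and of the eigenvalue assumption on $\Gamma$, whereas controlling $E_{\pi(x)}[\|x\|^p]$ requires quantifying the growth of the network outputs $g(z)$ and $\Gamma(z)$ and trading it against the Gaussian tail decay of $\pi(z)$. This is the step where an otherwise harmless growth hypothesis on the networks has to be made explicit.
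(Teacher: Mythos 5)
Your propriety argument is correct but runs in the opposite direction from the paper's: you bound the likelihood by $1$ and integrate the prior (which integrates to $1$ as a marginal), whereas the paper bounds the \emph{prior density} by a constant --- using that the eigenvalue condition on $\Gamma(z)$ keeps $|\Gamma(z)|^{-1/2}$ bounded and the Gaussian exponent nonpositive, so $\pi(x)\le C\int\pi(z)\,dz=C$ --- and then integrates the likelihood. For propriety alone either direction works. Your Bernstein--von-Mises verification is fine and in fact more explicit than the paper's one-line appeal to ``standard theory for linear models.''

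The gap is in the moment claim. Your bound $E_{\pi(x\mid y)}[\|x\|^p]\le Z^{-1}E_{\pi(x)}[\|x\|^p]$ forces you to control the prior moments, and for that you must assume polynomial (or linear) growth of $g(z)$ and $\operatorname{tr}\Gamma(z)$ in $\|z\|$. That hypothesis appears nowhere in the lemma or in the paper's standing assumptions, and you correctly flag that you would have to add it --- but that means your argument does not prove the lemma as stated. The fact you are missing is that the likelihood factor $\exp\bigl(-\tfrac{1}{2\sigma^2}\|Ax-y\|_2^2\bigr)$ is itself, up to a constant, a nondegenerate Gaussian density in $x$: since $A$ has full rank and $n\ge d$, the matrix $A^TA$ is positive definite, so this factor is proportional to $\mathcal{N}\bigl(x\mid (A^TA)^{-1}A^Ty,\;\sigma^2(A^TA)^{-1}\bigr)$. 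Combining this with the boundedness of $\pi(x)$ (which you essentially already established via the dominating function $(2\pi c)^{-d/2}\pi(z)$ in your continuity argument) gives $\pi(x\mid y)\le C'\,\mathcal{N}\bigl(x\mid (A^TA)^{-1}A^Ty,\;\sigma^2(A^TA)^{-1}\bigr)$ after normalization, and a Gaussian has finite moments of every order. This is the route the paper takes: bound the prior, integrate the likelihood, and all three claims follow with no growth assumption on the networks at all.
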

\begin{proof}
Observe that the prior
\begin{equation}
    \pi(x) = \int_{\mathcal{Z}} \pi(x\vert z) \pi(z) \mathrm{d}z \propto \int_{\mathcal{Z}} \left\vert \Gamma(z)\right\vert^{-1/2} \exp\left(-\frac{(x-g(z))^T\Gamma(z)^{-1}(x-g(z))}{2} - \frac{z^Tz}{2}\right) \mathrm{d}z
\end{equation}
is proper. By $\Gamma(z)$ being positive definite, the first part of the exponent is bounded. Since $\Gamma(z)$ has its smallest eigenvalue bounded away from zero, also the determinant is bounded. Then, propriety of $\pi(x)$ follows by propriety of $\pi(z)$. By the choice of a (bounded) Gaussian prior for $\pi(z)$, also $\pi(x)$ is bounded. Then, the remaining claims follow from standard theory of Bayesian inference for linear models.
\end{proof}
An immediate consequence of the previous lemma is the fact that the MMSE estimator derived from the posterior~\eqref{eq:posterior} is consistent.
Beside this obvious statistical advantage, unfortunately, the prior is computationally infeasible, since for every evaluation an integral has to be solved.

Therefore, we make use of a linearization. A similar approach in this context has been applied for density estimation~\cite{Liue2101344118}. Here, a Laplace approximation is suggested to render the intractable prior $\pi(x)$ feasible. The Laplace approximation applies a linearization to the generator mean by taking
\begin{equation}
    g(z) \approx g(z_0) + J_{z_0}(z-z_0)
\end{equation}
for some expansion point $z_0$, which has to be determined previously, and the Jacobian $J_{z_0}\in\mathbb{R}^{d, p}$ of the generator mean at $z_0$. Similarly, we have to expand the covariance matrix $\Gamma(z)$ around $z_0$. With the argument that the variance is expected to be less volatile than the mean, it is justifiable to do a constant expansion, only. This means $\Gamma(z) \approx \Gamma(z_0)$.
Higher order expansions are in general possible, since they require higher order derivatives of the mean and covariance function with respect to $z$ which are feasible due to automatic differentiation, but the resulting posterior becomes quite unapparent.

Inserting the expansions into the prior allows for an analytic representation of $\pi(x)$ and the prior PDF can be expressed as follows.

\begin{lemma}[Laplace approximated prior]
\label{lem:stoch prior laplace}
For $z_0\in\mathcal Z$, the prior $\pi(x)$ from the hierarchical model~\eqref{eq:encoder-decoder2} is approximated by a Gaussian distribution
\begin{equation}
  \label{eq:laplace prior}
    \pi(x) \approx \pi_L(x) = \mathcal N\left(x\vert g(z_0) - J_{z_0}z_0, \Gamma(z_0) + J_{z_0}J_{z_0}^T\right).
\end{equation}
The covariance matrix $\Gamma(z_0) + J_{z_0}J_{z_0}^T$ is positive definite and thus, the prior is proper on $\mathcal{X}$.
\end{lemma}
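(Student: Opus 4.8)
The plan is to recognize that, after the stated linearization, $x$ becomes an affine image of a jointly Gaussian vector, so that its marginal is automatically Gaussian and only the first two moments need to be identified. Inserting $g(z) \approx g(z_0) + J_{z_0}(z - z_0)$ together with the constant expansion $\Gamma(z) \approx \Gamma(z_0)$ into the hierarchical model~\eqref{eq:encoder-decoder2} lets me write
\begin{equation}
    x = g(z_0) - J_{z_0}z_0 + J_{z_0}z + \varepsilon, \qquad z \sim \mathcal{N}(0, I),\quad \varepsilon \sim \mathcal{N}(0, \Gamma(z_0)),
\end{equation}
with $z$ and the conditional noise $\varepsilon$ independent. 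Since an affine map of a Gaussian vector is again Gaussian, $\pi_L(x)$ is Gaussian, and it remains only to compute its mean and covariance.

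I would then read off the moments. By linearity and $E[z] = E[\varepsilon] = 0$, the mean is $g(z_0) - J_{z_0}z_0$, and by independence of $z$ and $\varepsilon$ the cross terms vanish, giving $\mathrm{Cov}(x) = J_{z_0}\mathrm{Cov}(z)J_{z_0}^T + \mathrm{Cov}(\varepsilon) = J_{z_0}J_{z_0}^T + \Gamma(z_0)$, exactly as asserted in~\eqref{eq:laplace prior}. An equivalent but more computational route is to evaluate the Gaussian integral $\pi_L(x) = \int_{\mathcal{Z}}\pi_L(x\vert z)\pi(z)\,\mathrm{d}z$ directly by completing the square in $z$; this reproduces the same mean and covariance and mirrors the integral representation of $\pi(x)$ already used in the proof of Lemma~\ref{lem:analytic posterior in original space}.

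For positive definiteness I would note that $\Gamma(z_0) + J_{z_0}J_{z_0}^T$ is the sum of a positive definite and a positive semidefinite matrix: for every $v \neq 0$,
\begin{equation}
    v^T\left(\Gamma(z_0) + J_{z_0}J_{z_0}^T\right)v = v^T\Gamma(z_0)v + \|J_{z_0}^T v\|_2^2 > 0,
\end{equation}
where the first term is strictly positive by the standing assumption that $\Gamma(z_0)$ is positive definite and the second is nonnegative. Hence the covariance is invertible and $\pi_L(x)$ is a well-defined, normalizable Gaussian, \ie\ a proper prior on $\mathcal{X}$. I do not expect a genuine obstacle here, as the statement is essentially the standard marginalization of a linear-Gaussian hierarchy; the only place the paper's hypotheses are truly needed is the strict positivity, which requires $\Gamma(z_0)$ to be positive definite rather than merely positive semidefinite, so that the possibly rank-deficient term $J_{z_0}J_{z_0}^T$ need only contribute nonnegatively.
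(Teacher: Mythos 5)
Your proposal is correct and is exactly the ``simple calculus'' the paper's one-line proof alludes to: linearize, recognize $x$ as an affine-Gaussian marginal, read off the mean $g(z_0)-J_{z_0}z_0$ and covariance $\Gamma(z_0)+J_{z_0}J_{z_0}^T$, and obtain positive definiteness (hence propriety) from the sum of a positive definite and a positive semidefinite matrix. No gap; you have simply written out the details the paper omits.
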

\begin{proof}
Simple calculus yields the form of $\pi_L(x)$ and the remaining claims follow directly from the assumption that $\Gamma(z_0)$ is positive definite for every $z_0\in\mathcal{Z}$.
\end{proof}
The prior $\pi_L(x)$ is a sensible choice for solving the inverse problem and computations can be carried out analytically. Moreover, the constructed approximation will preserve important properties in the fashion of Lemma~\ref{lem:analytic posterior in original space}.
We formulate this claim in the following theorem.
\begin{theorem}
    \label{thm:laplace posterior}
    Using the Laplace approximation $\pi_L(x)$ as a prior for the Bayesian inverse problem yields a Gaussian distribution as posterior on $\mathcal X$
    \begin{equation}
      \label{eq:Laplace posterior}
        \pi_L(x\vert y) = N(x\vert \hat{x}, \hat{S})
    \end{equation}
    with covariance matrix
    \begin{equation}
        \hat{S} = \left(\sigma^{-2}A^TA + \left(\Gamma(z_0) + J_{z_0}J_{z_0}^T\right)^{-1}\right)^{-1}
    \end{equation}
    and mean 
    \begin{equation}
        \hat{x} = \hat S\left[\sigma^{-2}A^Ty + \left( \Gamma(z_0) + J_{z_0}J_{z_0}^T \right)^{-1}\left(g(z_0)-J_{z_0}z_0\right)\right].
    \end{equation}
    For this posterior, moments of arbitrary order exist and $\pi_L$ converges, as $\sigma\to 0$, to a Gaussian centered at the maximum likelihood estimate $(A^TA)^{-1}A^Ty$, with covariance given by the inverse of the Fisher information matrix $\sigma^{-2}A^TA$, for the linear model~\eqref{eq2.1}.
    \end{theorem}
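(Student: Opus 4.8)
The plan is to recognize this as a standard conjugate linear-Gaussian update followed by an elementary asymptotic analysis, so that the work is essentially bookkeeping rather than anything deep.

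First I would abbreviate the Laplace prior from Lemma~\ref{lem:stoch prior laplace} by writing $m := g(z_0) - J_{z_0}z_0$ and $\Sigma := \Gamma(z_0) + J_{z_0}J_{z_0}^T$, so that $\pi_L(x) = \mathcal N(x \mid m, \Sigma)$ with $\Sigma$ positive definite. By~\eqref{eq:posterior} the posterior then satisfies $\pi_L(x\mid y) \propto \exp(-\tfrac{1}{2\sigma^2}\|Ax - y\|_2^2 - \tfrac12 (x-m)^T\Sigma^{-1}(x-m))$. Both terms in the exponent are quadratic in $x$, so the exponent is a single negative-definite quadratic form. Collecting the second-order term in $x$ identifies the precision $\hat S^{-1} = \sigma^{-2}A^TA + \Sigma^{-1}$, and collecting the first-order term and setting the gradient to zero gives the mean through the normal equation $\hat S^{-1}\hat x = \sigma^{-2}A^Ty + \Sigma^{-1}m$. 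Solving yields exactly the stated $\hat S$ and $\hat x$. This completing-the-square step is the routine core of the argument.

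Next, propriety and the existence of all moments follow immediately from the Gaussian form. Since $A$ has full rank with $n \ge d$, the matrix $A^TA$ is positive definite; together with $\Sigma^{-1} \succ 0$ this makes $\hat S^{-1}$ positive definite, hence invertible, so $\hat S$ is a genuine covariance and $\pi_L(x\mid y)$ is a well-defined nondegenerate Gaussian. Gaussian distributions possess finite moments of every order $p<\infty$, which settles that part of the claim. For the asymptotic statement I would factor out the scale: writing $\hat S = \sigma^2(A^TA + \sigma^2\Sigma^{-1})^{-1}$ and $\hat x = (A^TA + \sigma^2\Sigma^{-1})^{-1}(A^Ty + \sigma^2\Sigma^{-1}m)$ isolates the $\sigma$-dependence cleanly. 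Using continuity of matrix inversion on the positive definite cone, $A^TA + \sigma^2\Sigma^{-1} \to A^TA$ as $\sigma \to 0$, whence $\hat x \to (A^TA)^{-1}A^Ty$, the maximum likelihood (least-squares) estimate, and $\sigma^{-2}\hat S \to (A^TA)^{-1}$, i.e.\ $\hat S$ is asymptotically equal to the inverse Fisher information $(\sigma^{-2}A^TA)^{-1}$ of the model~\eqref{eq2.1}. The prior contribution $\Sigma^{-1}$ is therefore of lower order and washes out, in agreement with the Bernstein-von-Mises behavior already asserted in Lemma~\ref{lem:analytic posterior in original space}.

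The main obstacle, and really the only subtle point, is interpreting the limiting Gaussian correctly: because the inverse Fisher information $(\sigma^{-2}A^TA)^{-1} = \sigma^2(A^TA)^{-1}$ itself degenerates to the zero matrix as $\sigma \to 0$, the convergence is not toward a fixed distribution but is a statement of asymptotic equivalence of the $\sigma$-indexed parameters. Factoring out $\sigma^2$ (equivalently, rescaling $x - \hat x$ by $\sigma^{-1}$) makes this precise and shows that the mean locks onto the MLE while the normalized covariance converges to $(A^TA)^{-1}$; everything else is continuity of the inverse.
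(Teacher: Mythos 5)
Your proposal is correct, and the first two thirds of it (completing the square to obtain $\hat S$ and $\hat x$, and reading off propriety and all moments from the nondegenerate Gaussian form, with positive definiteness of $\hat S^{-1}$ guaranteed by $A^TA \succ 0$ and $\Sigma^{-1} \succ 0$) coincide with what the paper does, since its proof simply states that these are ``results of simple calculations for linear Gaussian models.'' Where you genuinely diverge is the asymptotic claim: the paper disposes of it by invoking the Bernstein--von Mises theorem, whereas you exploit the fact that the posterior is already an explicit Gaussian and compute the limit of its parameters directly, factoring $\hat S = \sigma^2\left(A^TA + \sigma^2\Sigma^{-1}\right)^{-1}$ and $\hat x = \left(A^TA + \sigma^2\Sigma^{-1}\right)^{-1}\left(A^Ty + \sigma^2\Sigma^{-1}m\right)$ and using continuity of matrix inversion. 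Your route is more elementary and arguably more appropriate here: Bernstein--von Mises is a general asymptotic-normality result, and applying it to a family of distributions that are exactly Gaussian for every $\sigma$ adds nothing beyond what the explicit formulas already give. You also make explicit a point the paper leaves implicit, namely that the limiting ``covariance'' $\sigma^2(A^TA)^{-1}$ degenerates to zero, so the statement must be read as asymptotic agreement of the $\sigma$-indexed mean and normalized covariance (equivalently, convergence of the law of $\sigma^{-1}(x-\hat x)$) rather than convergence to a fixed distribution; this is a correct and clarifying reading of the theorem. The only caveat is interpretive rather than mathematical: the paper phrases the mean convergence as convergence ``in probability'' (with $y$ random under the sampling model), while your argument is deterministic for fixed $y$; since the target $(A^TA)^{-1}A^Ty$ itself depends on $y$, your pointwise statement is the cleaner one and implies the probabilistic version.
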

\begin{proof}
The mean and covariance are results of simple calculations for linear Gaussian models and priors.
The remaining properties are clear for this Gaussian posterior. The convergence can be assessed by the Bernstein-von-Mises theorem. In particular, the posterior mean converges in probability to the maximum likelihood estimate and the covariance converges to the inverse of the Fisher information matrix of the linear model~\eqref{eq2.1}.
\end{proof}
\begin{remark}
It can be seen that the asymptotic covariance of the Laplace approximated posterior $\pi_L(x\vert y)$ is given by
\begin{equation}
    \hat{C} =  \left( \sigma^{-2} A^TA\right)^{-1}.
\end{equation}
But, a quantitative comparison to the asymptotic covariance of the $g$ push-forward latent space posterior
\begin{equation}
    \check{C} = J_{z^\ast}\left(\sigma^{-2} J_{z^\ast}^T A^T A J_{z^\ast}\right)^{-1}J_{z^\ast}^T
\end{equation}
is not trivial. In general, it is not possible to conclude that $\hat{C}$ is ``larger'' than $\check{C}$ in some sense. In fact, numerical considerations yield that the matrix $\hat C - \check C$ is often indefinite.
However, it is unambiguous from their structure that $\check{C}$ spans a linear space of dimension at most $p$. In contrast, the covariance $\hat C$ spans the whole space $\mathcal{X}$ of dimension $d$.
\end{remark}

In the end, we like to discuss the choice of the expansion point $z_0$ for the Laplace approximation.
A natural choice is to take the expansion point from the tuple $(z_0, x_0)$ which maximizes the integrand of the prior integral, \ie, the joint prior $\pi(x, z) = \pi(x\vert z) \pi(z)$. As long as $x\approx x_0$, the approximation~\eqref{eq:posterior} is reasonable. If this condition is violated, both the true prior~\eqref{eq:encoder-decoder2} and its approximation~\eqref{eq:laplace prior} are (expected to be) small, although their relative difference could be large. However, achieving this optimum in the high-dimensional space $\mathcal Z\times\mathcal X$ is numerically challenging and an optimizer is likely to get stuck in local optima. In our experiments, we achieved distinguished results using an efficient updating scheme which we will describe in the following.
\begin{enumerate}
\item Choose as initial value in $\mathcal{X}$ the solution to the linear least-squares problem
\begin{equation}
  \label{eq:starting value}
    x_0 = \argmin_{x\in\mathcal X} \sigma^{-2}\|Ax - y\|_2^2 + \|x - g(0)\|_{\Gamma(0)^{-1}}^2
\end{equation}
and for the latent vector take $z^0=f(x_0)$, where $f$ denotes the encoder mean map of the employed VAE. For more general generative models, a numerical optimization for $z^0$ can be performed which aims for the closest element in the latent space, which generates $x_0$. Since the consideration of variants of (multi-level) latent space approaches, disentanglement or style transfer are well outside the scope of this paper, we refer to~\cite{li2019disentangled,burgess2018understanding,karras2020analyzing}
\item Then, consider the log-integrand
\begin{align}
  \pi_0(x_0, z^0) &:= \log N(x_0 \vert g(z^0), \Gamma(z^0)) + \log N(z^0\vert  0, I) \\
  &\propto
  - \frac{1}{2}(z^0)^T z^0
  - \frac{1}{2}\log(\vert\Gamma(z^0)\vert) -\frac{1}{2}(x_0-g(z^0))^T\Gamma(z^0)^{-1}(x_0-g(z^0)) \nonumber \\
  &= 
  - \frac{1}{2}\left[\log(\vert\Gamma(z^0)\vert) + (z^0)^T z^0 + (x_0-g(z^0))^T\Gamma(z^0)^{-1}(x_0-g(z^0))\right] \nonumber
\end{align}
\item Evaluate the new $z^1$ by maximizing the integrand
\begin{equation}
    \label{eq:update z0}
     (I+J_{z^0}^T\Gamma(z^0)^{-1}J_{z^0}) z^1 = J_{z^0}^T\Gamma(z^0)^{-1} (x_0-g(z^0) + J_{z^0}z^0)
\end{equation}
\item If $\pi_0(x_0, z^1) > \pi_0(x_0, z^0)$ take the new value $z^1$ as expansion point candidate.
\item repeat step 3 and 4 until no further improvement is achieved.
\end{enumerate}
Note that, the choice of $x_0$ is taken in view of the data $y$. This, to some extend, renders the approach empirical Bayesian~\cite{casella1985introduction,morris1983parametric}.

\subsection{Discussion}
Recapitulating the previous sections, we compared two Bayesian inference problems based on the original space formulation~\eqref{eq2.1} and a formulation in latent space~\eqref{eq3.1}.
In literature, it is common to consider the latent space approach which simplifies the required computations to a lower dimensional space and quantities thereof are more feasible to estimate. However, this comes at the cost of an inherent bias introduced by the employed approximate statistical model. From a statistical point-of-view, it would be beneficial to be able to establish consistent Bayes estimators which asymptotically yield correct estimates also outside of the range of the generator.
Such a procedure based on the original space formulation is however numerically challenging. One possible approximative scheme is presented in the form of a Laplace approximation. This method inherits and preserves the consistency analysis of the original space approach, while being numerically feasible. In this regard, the presented approximation is expected to yield preferable solutions to the inverse problem when the information contained in the data is large.

\subsection{Generalization to unknown variance}
The natural extension of our analysis to the case of unknown variance is straightforward. We exemplary demonstrate this for the choice of an Inverse Gamma (IG) prior distribution for the variance, i.e. $\pi(\sigma^2)\propto (\sigma^2)^{-1-\alpha}\exp(-\beta/\sigma^2)$ with shape and scale hyperparameter $\alpha, \beta>0$. This often employed prior models the positive variance in a flexible manner. In fact, $\alpha$ and $\beta$ can be chosen such that $\pi(\sigma^2)$ has no finite moment. Moreover, the explicit form allows for an analytic derivation of certain quantities of interest.

For the inference in latent space the hierarchical model reads
\begin{align*}
    y\vert z, \sigma^2 &\sim \mathcal N(Ag(z), \sigma^2 I),\\
    z &\sim \pi(z) = \mathcal N(z \vert 0, I),  \\
    \sigma^2 \vert \alpha, \beta &\sim \pi(\sigma^2) = \mathrm{IG}(\sigma^2 \vert \alpha, \beta).
\end{align*}
The resulting posterior is given by
\begin{equation}
    \pi(z, \sigma^2\vert y) \propto \left(\sigma^2\right)^{-n/2}\exp\left(-\frac{1}{2\sigma^2} \|Ag(z)-y\|_2^2 - \frac{1}{2}\|z\|^2_2\right) \pi(\sigma^2).
\end{equation}
To obtain the marginal posterior for $z$, the variance needs to be integrated out. This marginalization can be done analytically, which yields
\begin{equation}
    \pi(z\vert y) \propto \int_0^\infty \left(\sigma^2\right)^{-n/2-1-\alpha}\exp\left(-\frac{\|Ag(z)-y\|_2^2 + \beta}{2\sigma^2} \right) \mathrm{d}\sigma^2\pi(z) \propto \left\{\|Ag(z) - y\|^2_2 + \beta\right\}^{-\frac{n+2\alpha}{2}} \pi(z).
\end{equation}
The marginal posterior is proper, since for $\beta>0$ the term in the brackets is bounded away from zero. Hence, the existence of moments of arbitrary order is guaranteed by the choice of a standard Gaussian distribution for $\pi(z)$.
For this marginal posterior, the claims of Lemma~\ref{lem:latent posterior} and Lemma~\ref{lem:inconsistency} follow directly with the same arguments\footnote{Similar to Remark~\ref{rem:limit}, the information limit must be understood in the sense of repeated sampling of $y$.}. This implies that, even in the case of unknown variance, the $g$ push-forward of the marginal posterior is inconsistent in the sense of derived Bayes estimators.

For the sampling distribution~\eqref{eq2.1} in the variable space $\mathcal{X}$ and the hierarchical prior of Section~\ref{sec:probabilistic}, a similar marginal posterior can be derived. In particular, the hierarchical model
\begin{align*}
    y\vert x, \sigma^2 &\sim \mathcal N(Ax, \sigma^2 I),\\
    x\vert z &\sim \mathcal{N}(g(z), \Gamma(z)), \\
    z &\sim \pi(z) = \mathcal N(z \vert 0, I), \\
    \sigma^2 \vert \alpha, \beta &\sim \pi(\sigma^2) = \mathrm{IG}(\sigma^2 \vert \alpha, \beta),
\end{align*}
yields the marginal posterior
\begin{equation}
    \pi(x \vert y) \propto \left\{\|Ax - y\|^2_2 + \beta\right\}^{-\frac{n+2\alpha}{2}} \pi(x),
\end{equation}
where the prior $\pi(x)$ is derived as in Section~\ref{sec:probabilistic}. Again, due to $\beta>0$, the same arguments of Lemma~\ref{lem:analytic posterior in original space} apply, which renders Bayes estimators based on $\pi(x\vert y)$ consistent, and $\pi(x)$ can still be approximated as in Lemma~\ref{lem:stoch prior laplace}, i.e. $\pi_L(x)\approx\pi(x)$. However, Theorem~\ref{thm:laplace posterior}, which analytically provides the posterior as a Gaussian distribution, is not directly applicable, since it relies on the fact that the posterior is given as the product of two (non-scaled) Gaussian probability density functions (PDF). Here, the posterior is slightly different and obtaining an approximative Gaussian would rely on numerical estimation of the MAP and Hessian of $\pi(x\vert y)$.

\begin{remark}
Another typical choice for the prior of the variance is the non-informative Jeffrey's prior $\pi(\sigma^2)\propto 1/\sigma^2$. In this case, ensuring propriety and existence of moments for the latent space posterior requires additional assumptions on $g$ to ensure boundedness from below of the term $\|Ag(z) - y\|_2$. A similar assumption is sufficient for the variable space.
\end{remark}

\section{Numerical examples}
\label{sec:numerics}
To validate our theoretical findings, we perform experiments on a well-known data set with linear inverse problems governed by a blurring operation and homoscedastic Gaussian noise. In particular the data model
\begin{equation}
    y \vert x \sim \mathcal{N} (Ax, \sigma^2 I)
\end{equation}
is considered with $x$ being an unknown vector in $\mathcal{X}=\mathbb{R}^{28\times 28}$ and $A\colon\mathcal X\to\mathcal Y=\mathbb{R}^{28\times 28}$ denotes a linear Gaussian blurring operator with known precision parameter $\eta>0$ which steers the impact of the blurring.
\begin{figure}[htb]
    \centering
    \begin{tikzpicture}
       \node[anchor=south west, inner sep=0] at (0,0) (image){
        \includegraphics[width=.80\linewidth]{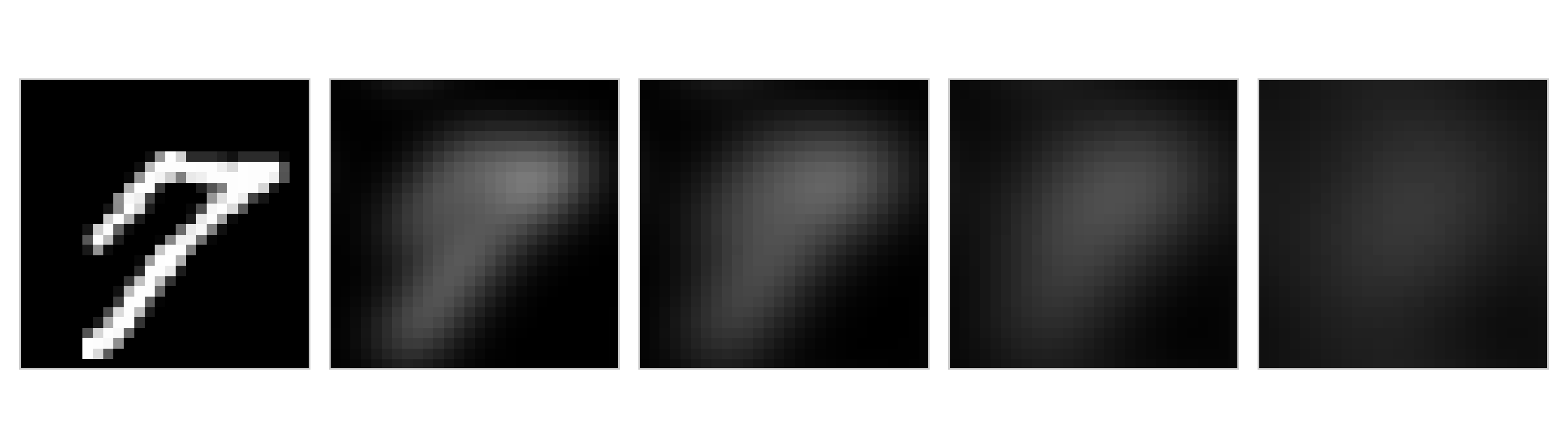}
       };
       \begin{scope}[x={(image.south east)},y={(image.north west)}]
          \draw [thick,dashed] (0.204,0.1) -- (0.204,0.9);
       \end{scope}
    \end{tikzpicture}

    \caption{Impact of the employed blurring operator on an MNIST image. On the left is the original black and white image scaled to $[0, 1]$. From left to right, the application of the blurring operator to the original image for $\eta=5, 4, 3, 2$ is shown.}
    \label{fig:blur}
\end{figure}
In Figure~\ref{fig:blur}, we show the applied blurring operation. The added noise levels in the experiments are defined for $\sigma = 10^{-s}$ for $s=1, 2, 3, 4$.

\subsection{Generative model}
We consider a typical example from machine learning. The data set of handwritten digits (MNIST)~\cite{deng2012mnist} consists of $60.000$ training sample and $10.000$ test samples, each sample corresponds to a grayscale image of size $(28, 28)$. A typical representative is shown in Figure~\ref{fig:blur}.

The generative model is chosen as an extension to the decoder of a VAE in an ``off-the-shelf'' Matlab~\cite{matlab:2021b} architecture. The latent space is chosen as $\mathcal{Z} = \mathbb{R}^{20}$ and the VAE is trained by optimizing the evidence lower bound (ELBO) on the training set using Adam with a constant learning rate $10^{-3}$ and batch-size $512$ for $20$ epochs. Afterwards, to obtain a probabilistic decoder, the final deconvolution layer of the decoder is extended to incorporate a diagonal covariance and the encoder is fixed during a transfer learning step of additional five epochs.
Example draws of the generative model are shown in Figure~\ref{fig:sketch_model} (bottom row) which resulted by taking the decoder output for some $z\sim\mathcal{N}(0, I)$. The results shown in this work are achieved using Python and Matlab. The Python source is available under \url{https://gitlab1.ptb.de/marsch02/datainformed-prior}.

\subsection{Inference with known variance}

\begin{figure}
    \centering
    \includegraphics[width=1.0\linewidth]{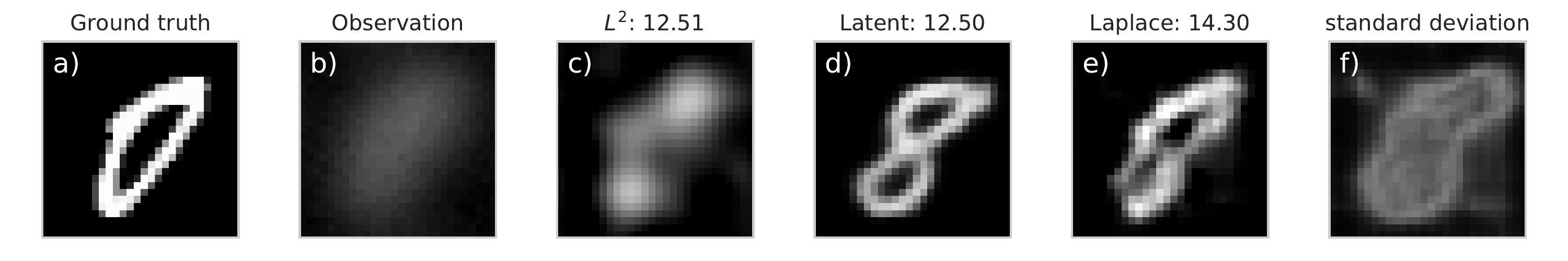}
    \caption{A single image from the MNIST test set a) is taken as ground truth for subsequent inference. The resulting observation after applying the blurring operator with $\eta=3$ and adding Gaussian noise with $\sigma=10^{-2}$ is shown in b). In c) the resulting reconstruction with the oracle $L^2$ regularized approach is given. We show in d) the reconstruction result using the Latent space approach and in e) the result of the Laplace method. Above the estimated reconstructions, the PSNR value is indicated. In f), the standard deviation of the posterior in original space is presented, \ie, the standard deviation of the marginalized posterior $\pi_L(x_i|y)$ for each pixel $i=1,\ldots, d$.}
    \label{fig:oneshot}
\end{figure}

For a quantitative comparison of the inversion quality of the \emph{latent} space approach and the \emph{Laplace} approximated variable space approach, we consider a single inference result in Figure~\ref{fig:oneshot} by taking one image of the test set as ground truth for the data model. Then, the blurring operator with $\eta=3$ is applied and Gaussian noise with $\sigma=10^{-2}$ is added. Subsequently, three approaches are applied to generate reconstruction estimates.

For the latent space approach of Section~\ref{sec:deterministic}, a BFGS optimizer is used to numerically compute the MAP of the latent space posterior $\pi(z\vert y)$ and application of $g$ yields the estimate. As starting point for the optimization in latent space, the solution to the least squares problem~\eqref{eq:starting value} is computed and the encoder mean of the VAE yields the initial guess.

For the Laplace approximated approach, the choice of $z_0$ is given in Section~\ref{sec:probabilistic}. With this expansion point, the mean of the Gaussian posterior $\pi_L(x\vert y)$ from theorem~\ref{thm:laplace posterior} is taken as the estimate.

As a reference, we additionally include a solution obtained by an $L^2$ regularized deterministic optimization of the least-squares problem
\begin{equation}
    x_{L^2}(\lambda) = \argmin_{x\in\mathcal{X}} \| Ax - y \|_2^2 + \lambda \|x\|_2^2.
\end{equation}
The regularization parameter is chosen through an oracle method by finding the value for $\lambda$ for which $x_{L^2}(\lambda)$ has the smallest difference to the ground truth in $L^2$ norm. This reference can be seen as the best possible homoscedastic Gaussian prior for the variable space approach.

As a quality measure, the peak-signal-to-noise ratio (PSNR) with the ground truth is assessed,\ie,
\begin{equation}
  \mathrm{PSNR}(x, \hat x) = 20\log_{10}(L)-10\log_{10} \|x - \hat x\|^2_2/d.
\end{equation}
Motivated by the normalized images of the MNIST data set, we take $L=1$, which renders the PSNR a rescaling of the mean-square error. However, a larger PSNR value indicates a better reconstruction result.

In Figure~\ref{fig:oneshot} it can be seen that, the oracle $L^2$ regularized approach is merely able to reconstruct the shape of the original digit. In contrast, the latent space approach yields an estimate, which resembles the digit $8$ albeit a $0$ was used as ground truth. This behavior can be explained by the closeness of the digits $0$ and $8$ in the latent space representation and the highly nonlinear, non-convex optimization problem which is to solve in $\mathcal{Z}$. In terms of PSNR, the best reconstruction is achieved using the Laplace approach in the variable space $\mathcal{X}$. Here, the blurring of the $L^2$ regularized solution is mostly resolved and the resulting digit is not collapsed into a $8$. Additionally, due to the construction of the posterior $\pi_L(x \vert y)$ and its estimate, the covariance is a byproduct. Here, we show the square root of its diagonal in Figure~\ref{fig:oneshot} f). Since the posterior is a Gaussian, this corresponds to the standard deviations of the marginalized posterior for each pixel.

\begin{figure}[htb]
    \centering
    \includegraphics[width=.9\linewidth]{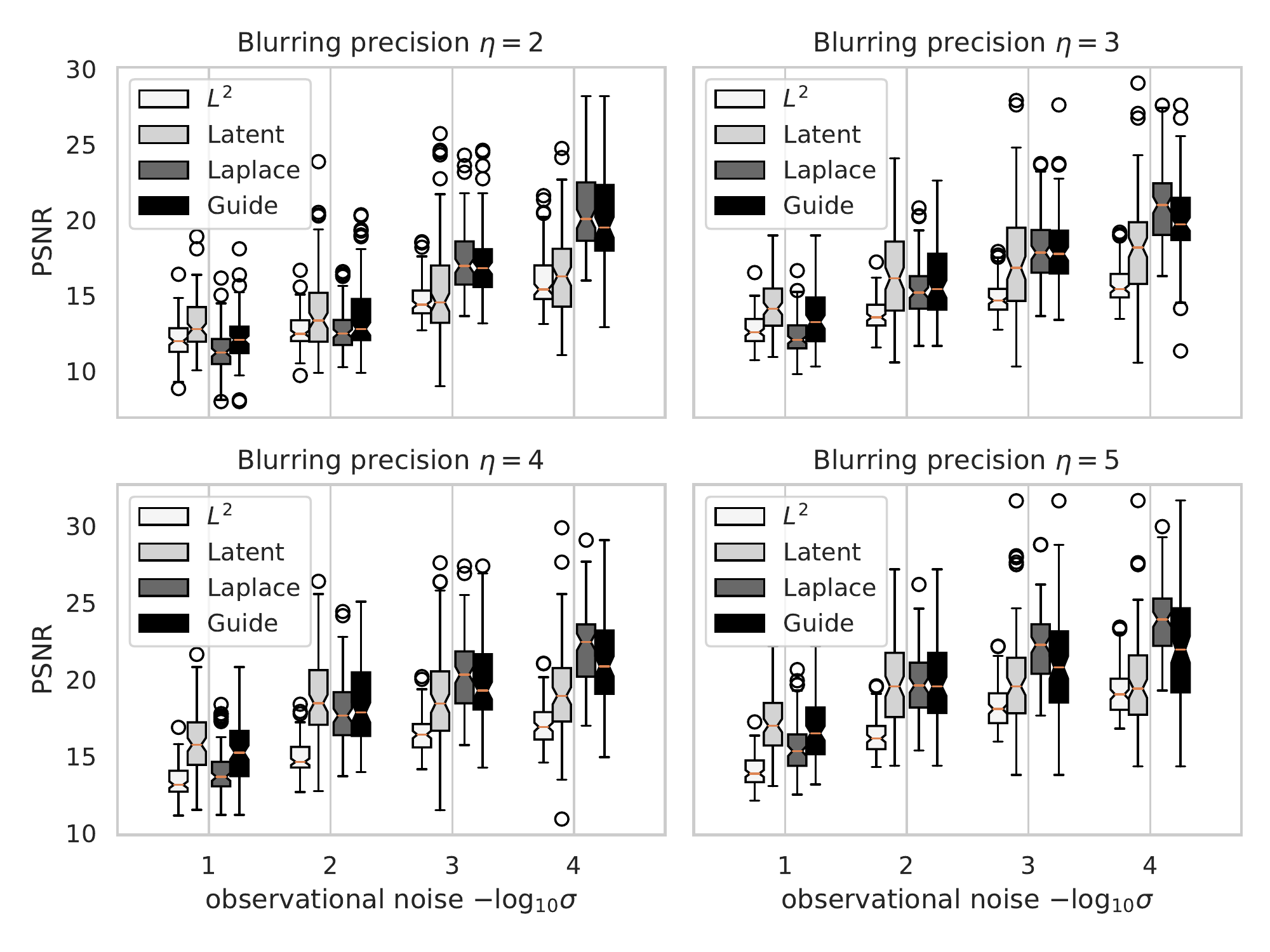}
    \caption{Peak-signal-to-noise ratio with the ground truth is shown for varying blurring precision $\eta$ in dependence on the noise variance $\sigma^2$. The negative logarithm of the standard deviation is taken as abscissa in each plot. We show the results of an oracle $L^2$ regularized solution, the results of the ``Latent'' space approach in Section~\ref{sec:deterministic}, the results of the ``Laplace'' approximated variable space approach of Section~\ref{sec:probabilistic} and a heuristic ``Guide'' as explained in Section~\ref{sec:guide}.}
    \label{fig:quality MNIST}
\end{figure}
For a statistical and comprehensive comparison, we now consider $100$ distinct images from the test set and perform the inference on every image. The results are collected in Figure~\ref{fig:quality MNIST}. For all blurring precisions, a similar behaviour of the PSNR can be observed. With decreasing noise variance, i.e. $\sigma\to 0$, the variable space approach, denoted ``Laplace'', is usually to favor over the latent space approach, denoted ``Latent''. This verifies our theoretical findings that the variable space approach is consistent. For moderately large values of $\sigma$, the latent space approach is usually to favor, since the bias introduced by the generative model is small compared to the impact of the data variance. In those cases, the latent space approach is also superior to the oracle $L^2$ regularized approach. This can be assessed for the variable space approach only for $\sigma\leq 0.01$.

\subsection{Empirical guidance}
\label{sec:guide}
We have shown that the variable space approach and the employed Laplace approximation is to favor in cases where the information contained in the data is large. However, the advice to use our approach is based on an asymptotic result and in general it is difficult to assess whether the asymptotic result is already relevant. Therefore, we propose a heuristic argument based on a simple bias estimation to give guidance on the choice of the approach.

For given $y$, both approaches yield an estimate $x_{\mathrm{Laplace}}$ and $x_{\mathrm{Latent}}$. Interpreting these estimates as new ground truth allows us to perform the inversion again using virtual data $y_{\mathrm{Laplace}}$ and $y_{\mathrm{Latent}}$. This subsequent inference yields $x_{\mathrm{Laplace}}^{\mathrm{Laplace}}$ and $x_{\mathrm{Latent}}^{\mathrm{Latent}}$, for which the squared error to their ground truth can be assessed. The method which yields the smaller deviation is to favor. In Figure~\ref{fig:quality MNIST} we employ this heuristic approach under the label ``Guide''. It is to observe that, using this guide yields, on average, better results than applying only one of the two approaches.
Considering squared errors in a cross-validation fashion, i.e. taking $x_{\mathrm{Laplace}}$ as new observation and applying the latent space approach to obtain $x_{\mathrm{Laplace}}^{\mathrm{Latent}}$ and vice versa, yields in our experiments a slightly worse guidance.

\section{Conclusions and future research}
\label{sec:outlook}
We presented approaches to the solution of linear Bayesian inverse problems using generative models as a prior.
Furthermore, we established convergence results for a state-of-the-art inference method in latent space and contrasted them to the asymptotic behaviour of an alternative approach in the high-dimensional variable space. Such an approach in the variable space is usually intractable due to the dimensionality and the complexity of the prior representation.
Therefore, we derived a novel inference technique in the variable space based on a Laplace approximation which yields an analytic posterior distribution that inherits and preserves a Bernstein-von-Mises result.
An extension to the case with unknown variance is presented and numerical examples underpin our theoretical findings.
Finally, we motivate an empirical guidance to choose between the presented approaches in real scenarios.

This proof-of-concepts work paves the way for future research tackling various questions. Extending the numerical examples to more complex and practical data sets is an important step which raises also the need for efficient numerical treatment, e.g. to tackle the involved matrix inversions. Different and more complex models for the approximation of the prior in variable space can be pursuit, e.g. using Gaussian mixture models quadrature schemes. In this regard, efficient sampling approaches in high dimension may be required to obtain samples from the posterior. Also, connections to other recent concepts, such as bilevel optimization~\cite{calatroni2017bilevel} can be of interest.

\bibliographystyle{IEEEtran}
\bibliography{datainformed_prior}

\appendix

\section{Visual inference examples}
In this section, we highlight some details on the inference results of Section~\ref{sec:numerics}.
In particular, we consider one ground truth $x$ and fix the blurring operator $A$ with precision $\eta=4$. Then, we thoroughly analyse the performance of the presented approaches for varying noise variance $\sigma^2$. \begin{figure}[htb]
    \centering
    \begin{tikzpicture}[thick,scale=1.0, every node/.style={transform shape}]
    \node[anchor=south west] (image){
        \includegraphics[width=.9\linewidth]{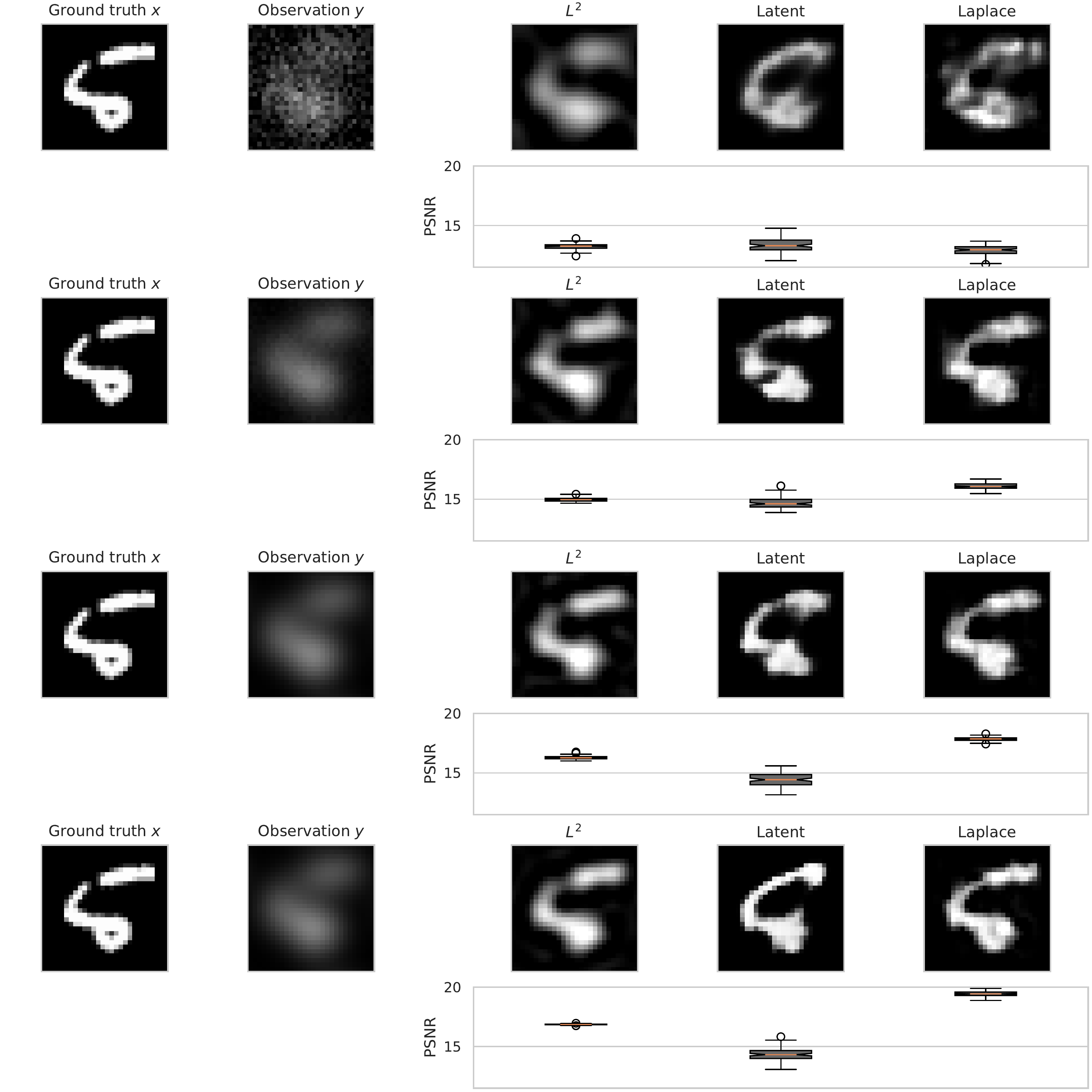}
    };
    \node[anchor=south west, above left= -3.2cm and -5cm of image, text width=3cm, align=center] (t1) {
       \small $\eta=4,\;\sigma=10^{-1}$
    };
    \node[anchor=south west, above left= -7.2cm and -5cm of image, text width=3cm, align=center] (t2) {
       \small $\eta=4,\;\sigma=10^{-2}$
    };
    \node[anchor=south west, above left= -11.2cm and -5cm of image, text width=3cm, align=center] (t3) {
       \small $\eta=4,\;\sigma=10^{-3}$
    };
    \node[anchor=south west, above left= -15.2cm and -5cm of image, text width=3cm, align=center] (t4) {
       \small $\eta=4,\;\sigma=10^{-4}$
    };

\end{tikzpicture}

    \caption{Visualization of the reconstruction quality of the different approaches. First column shows the same ground truth $x$, which is subject to blurring and noise in the second column showing the observation $y$. From top to bottom, the noise is decreased. The columns $L^2$, ``Latent'', and ``Laplace'' show one resulting estimate for each approach and below each estimate we depict the PSNR for $100$ repeated experiments, each with a different noise realization.}
    \label{fig:extended}
\end{figure}
In Figure~\ref{fig:extended} we show the reconstruction quality of the $L^2$ regularized (oracle) method, the ``Latent'' space approach, and the ``Laplace'' method, side-by-side. From top to bottom, the noise variance is decreased, \ie, we show $\sigma=10^{-s}$ for $s=1, 2, 3, 4$.
The images represent one estimate for each approach and the box plots below each estimate show the distribution of PSNR values for $100$ repeated reconstructions with different noise realizations for the observation $y$. In the noisy regime (top row), the latent space approach is to favor and yields, on average, the best PSNR. For deceasing $\sigma^2$, the Laplace approach yields better results. Also, it can be observed that the latent space approach has a larger spread in the PSNR values, which indicates a strong dependence on the added noise. In contrast, the Laplace method is more stable in this regard.
\end{document}